\newtheorem{theorem}{Theorem}[section]
\newtheorem*{theorem*}{Theorem}
\begin{document}
%

%
\runningauthor{Li, Murias, Major, Dawson, Carlson}


\twocolumn[

\aistatstitle{On Target Shift in Adversarial Domain Adaptation}

\aistatsauthor{Yitong Li$^1$,\And \hspace{-1.5cm} Michael~Murias$^2$, \And \hspace{-1.5cm} Samantha Major$^3$, \And \hspace{-1.5cm} Geraldine~Dawson$^3$, \And \hspace{-1,cm} David E.~Carlson$^{1,4,5}$}
\aistatsaddress{$^1$Electrical and Computer Engineering, $^2$Duke Institute for Brain Sciences, $^3$Psychiatry and Behavioral Sciences, \\$^4$Civil and Environmental Engineering, $^5$Biostatistics and Bioinformatics, Duke University \\ \texttt{\{yitong.li, michael.murias, samantha.major, geraldine.dawson, david.carlson\}@duke.edu} } 

]

\begin{abstract}
Discrepancy between training and testing domains is a fundamental problem in the generalization of machine learning techniques. Recently, several approaches have been proposed to learn domain invariant feature representations through adversarial deep learning.  However, label shift, where the percentage of data in each class is different between domains, has received less attention. Label shift naturally arises in many contexts, especially in behavioral studies where the behaviors are freely chosen.  In this work, we propose a method called Domain Adversarial nets for Target Shift (DATS) to address label shift while learning a domain invariant representation. This is accomplished by using distribution matching to estimate label proportions in a blind test set. We extend this framework to handle multiple domains by developing a scheme to upweight source domains most similar to the target domain. Empirical results show that this framework performs well under large label shift in synthetic and real experiments, demonstrating the practical importance.
\end{abstract}

\section{Introduction}\label{sec:intro}

In supervised learning, the goal is to be able to make predictions on newly collected data (the target domain) by training on previously labeled data (the source domain). However, a gap between the source and target domains is often inevitable, due to either the changes in the data, differing data collection processes, or differing applications. Domain adaptation aims to bridge these distribution gaps to enhance generalization~\cite{TCA,ganin2016domain,mansour2009domain,zhao2017multiple}. In this manuscript, we focus on unsupervised domain adaptation, where the target samples have no labels available during training. A common approach for this scenario is to match the marginal distribution of the features without using labels ~\cite{huang2007correcting,sugiyama2008direct,gretton2009covariate}.
This is motivated by the problem of ``covariate shift,''  where the distribution of features may change, but the relationship between features and the associated outcome is constant.

In order to solve the problem of covariate shift, most existing algorithms implicitly assume that the label proportions remain unchanged~\cite{du2014semi}. However, a common case in the real world is that the percentage of samples from each class are highly variant between domains. Consider a case where we model patients in a study as separate domains.  When data is collected, the label proportions can be drastically different between patients due to many reasons, such as free behavioral choice, missing data, or differing outcomes or progression from a disease.  We will show empirically that in such a situation, these existing approaches do not help generalization due to this incorrect assumption. Similar problems also arise in anomaly rejection~\cite{scott2013classification,wen2014robust} and remote sensing image classification~\cite{tuia2015multitemporal}. This kind of problem is called class-prior change~\cite{du2014semi} or target shift~\cite{redko2018optimal}.  If an algorithm cannot account for such a shift, it can be provably suboptimal in deployment, and an overfit classifier can incorrectly remember the label proportions~\cite{tasche2017fisher}.  
Previous methods have addressed this problem by adding regularization terms~\cite{gretton2009covariate,huang2007correcting,long2015learning}.  In this manuscript, we show how the label proportions in the target domain is estimated and appropriately weight samples to correct adversarial domain adaptation methods for target shift.  


Additionally, the number of source domains is not limited to one in practice.  This necessitates explicitly accounting for multiple sources instead of treating the data as one large source domain. An unfortunate issue in multiple domain adaptation is that adding more domains is not always better.  Adding irrelevant (or less relevant) domains can hurt generalization performance~\cite{mansour2009domain}.  There has been some recent works to address choosing appropriate source domains for use in domain adaptation~\cite{zhao2017multiple,redko2018optimal}. In a similar vein, we propose a scheme to weight source domains by how similar they are to the target, allowing the domain adaptation to use only the most relevant information. This weighting can be naturally included with our previous scheme to address label imbalance. 



In this work, we propose an approach called Domain Adversarial nets for Target Shift (DATS) to address unsupervised multiple source domain adaptation with target shift. Our model is implemented in a neural network framework. First, we extend an adversarial learning scheme to get domain-invariant features~\cite{ganin2016domain} to account for label imbalance.  In these extracted features, the target label proportion is estimated by minimizing the marginal distribution gap between source and target after accounting for the known or estimated label proportions. To jointly deal with multiple sources, a weighting vector is learned to determine how much each source domain should be used. This model is trained end-to-end in an iterative way. The proposed model captures strength from related source domains while eliminating the influence from less correlated domains. Experimentally, we demonstrate on real-world data that the proposed model improves performance over numerous baselines in the presence of target shift.

\section{Notation, Background, and an Illustrative Problem}\label{sec:background}
\vspace{-1mm}
Before introducing the proposed model, the Domain Adversarial Neural Network (DANN)~\cite{ganin2016domain} framework is introduced. We will then show a simple example of how this approach does not naturally handle label shift, motivating the extensions to solve these situations.

Assuming the training/source data is given as $\{ \bm x_i, y_i, s_i\}$ for $i=1.\cdots,N_S$, where $\bm x_i$ is the input, $y_i$ is the label with values from $\{1,\cdots,L\}$, and $s_i \in \{1, \cdots, S\}$ indicates which domain the data comes from.  For domain $\mathcal{D}_s$, it contains a total of $n_s$ samples and $\sum_{s=1}^S n_s = N_S$. $S$ is the total number of source domains. The testing/target samples are given as $\{\bm x_i, s_i=T\}$ for $i=N_S+1,\cdots,N_S+N_T$ without label $y$ given. $\bm h_i = f(\bm x_i; \bm \theta_h)$ is the encoded feature of $\bm x_i$ generated by the feature extractor.  The entries of $\bm \gamma^{s} \in \Delta^{L-1}$ are the label proportions of domain $\mathcal{D}_s$, which lies on the simplex. The target domain label proportion $\bm \gamma^{T}$ is unknown, which we will later estimate. The superscript $s$ and $T$ indicate the source and target domain indexes. 

\begin{figure}[htb]
\centering
\subfigure[DATS model framework.]{
\includegraphics[width=.50\textwidth]{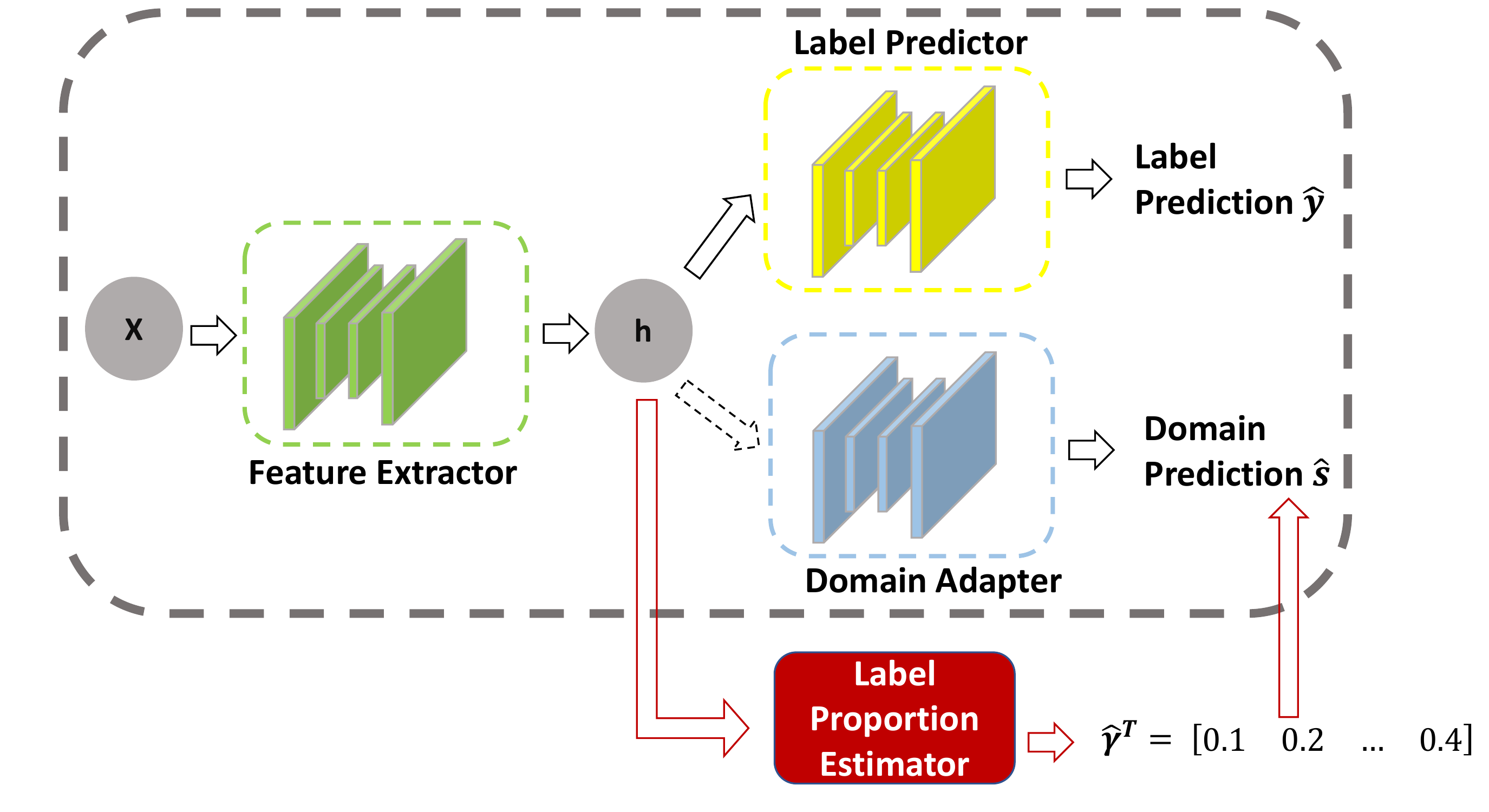}\label{fig:framework}}
\vspace{-2mm}
\subfigure[A toy example.]{
\includegraphics[width=.45\textwidth]{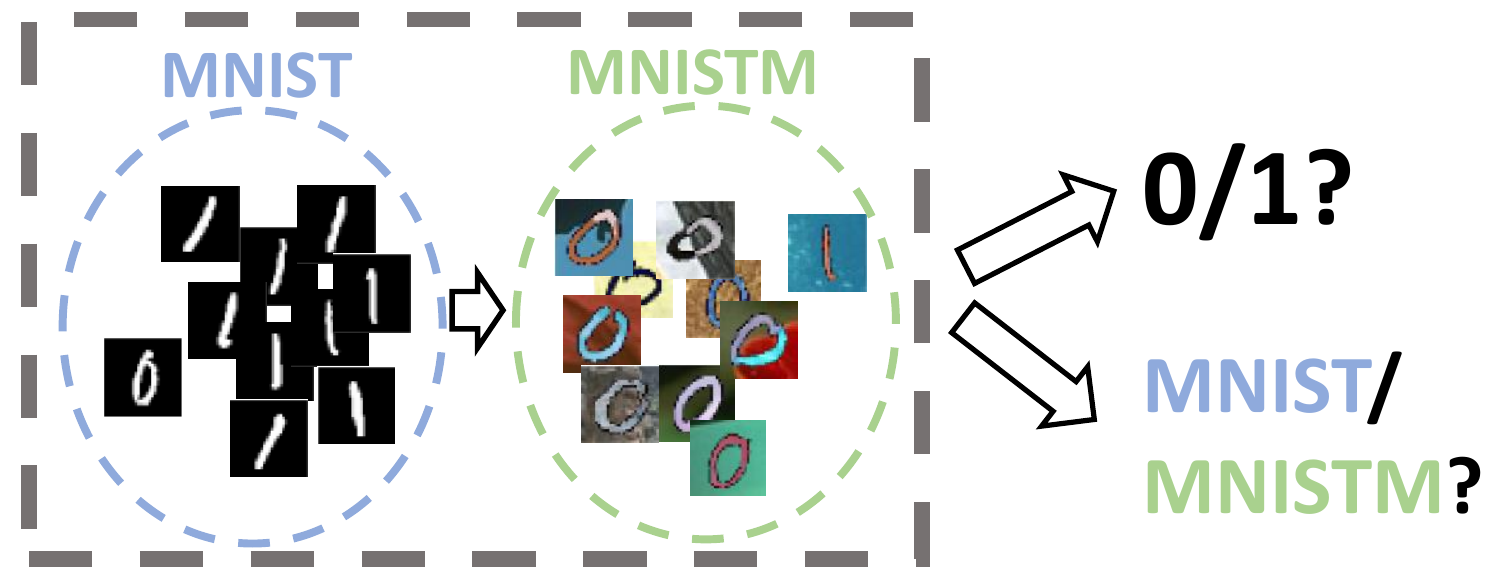}\label{fig:dann_toy} }
\vspace{-2mm}
\caption{(Top) The framework of the proposed model contains a shared feature extractor, a label predictor, a domain predictor and a label proportion estimator. The domain weighting scheme is not visualized. (Bottom) An unbalanced toy example. The proportion of digit $0$ and $1$ hugely differs in the two domains.}
\vspace{-4mm}
\end{figure}

Compared to the proposed model DATS, the framework of DANN is given in the gray dotted box in Figure~\ref{fig:framework} (everything except the red box). The intuition is to learn encoded features that can correctly predict the label while being unable to accurately predict the domain, thereby requiring that the features are domain invariant. 
DANN contains three components: a feature extractor, a label predictor, and a domain adapter. The black dotted arrow (in contrast to the solid arrows) between the extracted features and the domain adapter marks an adversarial relationship; in other words, that features $\bm h$ are expected to give low domain classification accuracy. All three components are implemented as neural networks. The feature extractor outputs features $\bm h = f(\bm x; \bm \theta_h)$ with parameters denoted as $\bm \theta_h$. $\bm h$ is then input to the label predictor with loss $\mathcal{L}_Y (y, f(\bm x; \bm \theta_h); \bm \theta_Y)$. For a classification task, $\mathcal{L}_Y$ is the cross entropy loss between the predicted and the true label pairs. For regression, this can be the mean squared error. Similarly, the domain adapter has loss $\mathcal{L}_D (s, f(\bm x; \bm \theta_h); \bm \theta_D)$ to predict which domain the data belongs to with the cross-entropy classification loss. This is adversarial as the learned features minimize the discrepancy between different domains while simultaneously maximizing label prediction accuracy. The objective function can be written as
\begin{align}\label{eq:dann_objective}
    &\hspace{-2mm}\min_{\theta_h, \theta_Y}  \max_{\theta_D} \nonumber \\ 
    &\resizebox{0.9\columnwidth}{!}{ $\mathbb{E}_{p(\mathbf{x},y,s)}\left[\mathcal{L}_Y (y, f(\bm x; \bm \theta_h); \bm \theta_Y)-
    \alpha_D \mathcal{L}_D (s, f(\bm x; \bm \theta_h); \bm \theta_D)\right].$}
\end{align}
Note that target samples are not included in the first term since the label $y$ is unknown in the target domain. $\alpha_D$ controls the relative strength of the adversary.

 DANN assumes that the domain adapter should contain no information about the label by learning features that maximize the domain loss $\mathcal{L}_D$. However, the domain adapter \emph{\textbf{must}} contain information about the label under target shift. Consider the example in Figure~\ref{fig:dann_toy} for digit image classification, where we consider domain transfer from the well-known MNIST (source) to MNISTM (target) dataset. MNISTM is a colorized version of the MNIST dataset that is used for demonstrating domain adaptation~\cite{ganin2016domain}.  Suppose that the source domain contains $10\%$ of digit 1 and $90\%$ of digit 0 while the target domain has $90\%$ of digit 1 and $10\%$ of digit 0. If a label classifier can achieve  $100\%$ accuracy, then an optimal domain predictor must be at least $90\%$ accurate.  This can be seen because the label itself is $90\%$ accurate for predicting domain, so this information must exist in the feature set.  However, the domain classifier in DANN is aims to achieve $50\%$ accuracy, which means that the learned features cannot distinguish between the domains. This contradicts the result from the naive classifier, and enforcing this condition destroys performance, which we detail empirically in Section~\ref{subsec:exp_toy}. 

In order to solve this problem, we propose the DATS model, which estimates the label distribution in the target domain to reweight data samples. This approach follows from the similar idea as balancing classes in logistic regression~\cite{gretton2009covariate}.

\section{Domain Adaptation under Target Shift}\label{sec:Model}
To address the target shift, a label proportion estimator is proposed. This is visualized as the red box in Figure~\ref{fig:framework}.  The technique for estimating the target label proportions $\bm \gamma^T$ is introduced in Section \ref{subsec:label_weight}, which is used to reweight data samples in the adversary. The red arrow in Figure~\ref{fig:framework} illustrates the usage of the label weight. The proposed method to weight multiple source domains is introduced in Section~\ref{subsec:domain_weight}. After that, a distribution matching technique is introduced to further improve the weighting accuracy in Section \ref{subsec:label_weight2}. Finally, the complete loss function and pseudo-code is covered in Section \ref{subsec:algorithm_outline}. In the following, the superscript $s=1,\cdots,S$ is the index of the source domain. For clarity, $T$ means the target domain, while  $\intercal$ means vector/matrix transpose. The label is denoted by a subscript $l \in \{1,\cdots,L \}$.


\subsection{Label Weighting Scheme}\label{subsec:label_weight}
The label proportions in the source domains are known simply by counting examples, with $\gamma^{s}_l$ representing the proportion of label $l$ in source $\mathcal{D}_s$. 
For the target domain, we propose to estimate the proportion of each label over the whole set, rather than estimating the label of each individual sample~\cite{ash2016unsupervised}. Our empirical results demonstrate that this enhances robustness.

A common assumption in target shift is that the conditional distributions from the label to the features are constant, such that $P^s(\bm x| \bm y) = P^T(\bm x| \bm y) = P(\bm x| \bm y)$ for $s=1,\dots,S$, and the variability in the joint distribution $p(\bm x, \bm y)$ is due to the shift in label proportions $p(\bm y)$~\cite{nguyen2016continuous}.  
Such an assumption is obviously untrue in the raw data for cases such as MNIST to MNISTM (see Figure~\ref{fig:dann_toy}, where the color differences in the raw data break this assumption). After correcting for the target shift and with the adversarial framework, the assumption that the feature extractor $\bm h = f(\bm x; \bm \theta_h)$ provides domain-invariant features is much more reasonable, so the assumption $P^s(\bm h| \bm y) = P^T(\bm h| \bm y) = P(\bm h| \bm y)$ is better aligned with reality.  


This assumption can be used to estimate the label proportions in the target domain via marginal distribution matching~\cite{zhang2013domain}; however, unlike previous approaches this estimation proceeds on the extracted feature space. Using known properties from the source domains and the weights on the target domain, we can reweight a source domain by labels to match the target distribution under the assumption.  For domain $\mathcal{D}_s$, this weighted distribution is given as
\begin{equation}\label{eq:mean_est}
\textstyle Q^s(\bm h) = \sum_{l=1}^L P^s(\bm h| y = l) {\gamma}^T_l .
\end{equation}
If the above assumption holds and $\bm{\gamma}^T$ is correct, then $Q^s(\bm h)$ is identical to the target distribution with $Q^s(\bm h)=P^T(\bm h)$. Therefore, one estimation strategy is to estimate $\bm \gamma^{T}$ to minimize a distance metric $d(Q^s(\bm h), P^T(\bm h))$ by jointly considering all source domains, where $d(\cdot)$ is a distance metric. 

In the literature, mean matching has proven to be a simple and effective approach to these types of problems~\cite{gretton2009covariate,huang2007correcting}. In contrast to prior work, we will perform mean matching in the extracted feature space rather than in the raw data. Eq.~\eqref{eq:mean_est} can be estimated by using sample means of the data points by $\bm M^{s} \bm \gamma^{T} $, where $\bm M^{s} = \bm M^{s}(\bm h| \bm y)$ is the concatenation of $\left[ \bm \mu^{s}(\bm h| y=1),\bm \mu^{s}(\bm h| y=2),\cdots,\bm \mu^{s}(\bm h| y = L) \right]$, the empirical sample means from the source domain $\mathcal{D}_s$. 
The target label proportion $\bm \gamma^T$ is estimated by restricting to the simplex and minimizing the loss function,
\begin{equation}\label{eq:label_proportion_loss}
\textstyle \mathcal{L}_{r_M}(\bm \gamma^T) = \sum_{s=1}^S \lambda^{s} || \bm M^{s}\bm \gamma^T - \bm \mu^{T} ||_2 ^2 
\end{equation}
$\lambda^{s}$ is defined as the domain weight that controls which source domains are used more (or less) for domain adaptation, described in Section~\ref{subsec:domain_weight}. $\bm \mu^T$ is the encoded feature mean of the target. The $L_2$ loss in~\eqref{eq:label_proportion_loss} can be replaced with a distribution loss such as the Wasserstein loss~\cite{redko2018optimal} or Maximum Mean Discrepancy loss~\cite{gretton2009covariate}, which we expand upon in Section \ref{subsec:label_weight2}. Note that \eqref{eq:label_proportion_loss} is a standard linearly constrained quadratic problem, yielding estimated target label proportions $\hat{\bm \gamma}^T$. In practice, this is updated by gradient descent in each minibatch.

Given the label proportions, it remains to correct the cross-entropy loss in the domain adversary defined in \eqref{eq:dann_objective} for the target shift.  To do this, define $\beta^{s}(y=l) = \frac{P^T(y=l)}{P^s(y=l)}=\frac{\gamma_l^T}{\gamma_l^s}$ as an unnormalized probability ratio of the target domain to domain $\mathcal{D}_s$, and $\bm \beta^s$ is the vector form across all labels in domain $\mathcal{D}_s$. $\hat{\gamma}^T$ is plugged in to get an empirical estimate $\hat{\bm \beta}^s$. 

By introducing the additional label weight, the domain adapter in Figure~\ref{fig:framework} is mathematically akin to a weighted classifier. The loss function of the domain adapter is given as
\begin{eqnarray}\label{eq:D_loss}
    \mathcal{L}_{D}(\bm \theta_D, \bm \theta_h) = \underbrace{\sum_{i=1}^{N_S} \frac{ \lambda^{s_i} \hat{\beta}^{s_i}_{y_i}}{n_{s_i}||\hat{\bm \beta}^s||_1} \mathcal{C}\left( \hat{s}_i, s_i; \bm \theta_D,  \bm \theta_h \right)}_\text{Source Samples} \nonumber \cr
   + \underbrace{\frac{1}{L N_T}\sum_{i=N_S+1}^{N_S + N_T} \mathcal{C}\left( \hat{s}_i, s_i; \bm \theta_D,  \bm \theta_h \right)}_\text{Target Samples} .
\end{eqnarray}
$\mathcal{C}(\cdot)$ is used as the cross-entropy loss between the estimated domain index and the ground truth. 
The label weight $\bm \beta$ is used for each source domain sample. $\hat{\beta}^{s_i}_{y_i}$ is the estimated label weight for sample $\bm x_i$ in domain $s_i$ with label $y_i$. $\lambda^s$ determines the importance of source domain $\mathcal{D}_s$, which will be introduced in the next section. 
The weighted version of domain loss increases the robustness of the algorithm under target shift.

We note that if the stated assumptions are true, then the proportion estimation scheme is asymptotically consistent. This is stated formally below.
\begin{theorem}[]\label{thm:mean_matching}
Assume that $P^s(\bm h| \bm y) = P^T(\bm h| \bm y) = P(\bm h| \bm y)$, the variance in the feature space is finite, and the label proportions are all non-zero.  When the number of training and testing samples goes to infinity, $\hat{\bm \gamma}^T$ is asymptotically consistent for $\bm \gamma^T$ if $ \left( \bm M^s \right)^{\intercal} \bm M^s $ is invertible for all $s$.
\end{theorem}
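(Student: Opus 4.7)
The plan is to combine a strong-law-of-large-numbers argument for the empirical quantities $\bm M^s$ and $\bm \mu^T$ with a standard argmin-consistency argument for the quadratic loss $\mathcal{L}_{r_M}$ over the compact simplex $\Delta^{L-1}$.

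First I would translate the conditional-equality assumption into a population identity. Since $P^s(\bm h\mid y)=P^T(\bm h\mid y)=P(\bm h\mid y)$ for all $s$, the class-conditional mean $\bm m_l:=\mathbb{E}[\bm h\mid y=l]$ is shared across every domain; set $\bm M=[\bm m_1,\dots,\bm m_L]$ and $\bm \mu^T_\star=\mathbb{E}_{P^T}[\bm h]$. The law of total expectation then gives the exact identity $\bm \mu^T_\star=\bm M\bm \gamma^T$, i.e.\ $\bm \gamma^T$ is a zero of the idealized residual $\bm M\bm \gamma-\bm \mu^T_\star$.

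Next I would promote this to a statement about the empirical objective. Because every $\gamma^s_l$ is strictly positive, each class in each source receives $\Theta(n_s)$ samples, and the finite-variance hypothesis together with the strong law of large numbers yields $\bm M^s\to\bm M$ and $\bm \mu^T\to\bm \mu^T_\star$ almost surely as $n_s,N_T\to\infty$. Since $\bm \gamma$ ranges over the compact set $\Delta^{L-1}$ and each summand of~\eqref{eq:label_proportion_loss} is jointly continuous in $(\bm M^s,\bm \mu^T,\bm \gamma)$, the empirical objective converges uniformly on $\Delta^{L-1}$ to the population objective $\mathcal{L}_\star(\bm \gamma)=\sum_s\lambda^s\|\bm M\bm \gamma-\bm \mu^T_\star\|_2^2$.

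Finally I would verify that $\bm \gamma^T$ is the unique minimizer of $\mathcal{L}_\star$ and invoke argmin consistency. Clearly $\mathcal{L}_\star(\bm \gamma^T)=0$; any other minimizer $\bm \gamma'$ must satisfy $\bm M(\bm \gamma'-\bm \gamma^T)=\bm 0$. The invertibility of $(\bm M^s)^\intercal\bm M^s$ passes to the limit, so $\bm M$ has full column rank, forcing $\bm \gamma'=\bm \gamma^T$. A standard argmin-continuity result for $M$-estimators (e.g.\ van der Vaart, Theorem~5.7) then delivers $\hat{\bm \gamma}^T\to\bm \gamma^T$ almost surely. The main obstacle is largely bookkeeping: one must verify uniform convergence carefully when the domain weights $\lambda^s$ are themselves learned (boundedness of $\lambda^s$ plus strict positivity for at least one $s$ is enough), and note that invertibility for any single source domain already pins down $\bm \gamma^T$, so the blanket hypothesis over all $s$ is more than sufficient. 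The statement gives no rate, which is appropriate since nothing constrains the relative growth of $N_T$ and the $n_s$.
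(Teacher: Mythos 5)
Your proof is correct, but it takes a genuinely different route from the paper. The paper's argument is computational: it writes the unconstrained least-squares solution in closed form, $\hat{\bm \gamma}=((\bm M^s)^\intercal\bm M^s)^{-1}(\bm M^s)^\intercal\bm \mu^T$, invokes the central limit theorem to show that $\bm M^s$ and $\bm \mu^T$ concentrate around their population counterparts $\bm M^*$ and $\bm M^*\bm\gamma^T$, and then passes to the limit inside this formula so that the projection collapses to the identity and returns $\bm\gamma^T$. Your argument instead establishes almost-sure convergence of the empirical objective to the population objective $\mathcal{L}_\star(\bm\gamma)=\sum_s\lambda^s\|\bm M\bm\gamma-\bm\mu^T_\star\|_2^2$ uniformly over the compact simplex, checks that full column rank of $\bm M$ makes $\bm\gamma^T$ the unique zero, and concludes by argmin consistency. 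The trade-off: your route correctly handles the fact that \eqref{eq:label_proportion_loss} is minimized \emph{subject to the simplex constraint}, a point the paper's closed-form expression quietly ignores (the unconstrained minimizer need not lie in $\Delta^{L-1}$ at finite sample sizes); it also deals cleanly with the learned weights $\lambda^s$ and with multiple sources in one pass, whereas the paper treats a single source and appeals to linearity for the extension. The paper's route, in exchange, exposes the explicit normal-equations structure and, via the CLT statements for $\bm\mu^s_l$ and $\bm\mu^T$, implicitly carries rate and asymptotic-variance information that your SLLN/argmin argument does not provide. Your observation that invertibility for a single source already identifies $\bm\gamma^T$ is accurate and is consistent with the paper's remark that a combination of asymptotically unbiased estimators remains asymptotically unbiased.
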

Note that the superscript $T$ means target, while $\intercal$ means transpose. The proof sketch of Theorem \ref{thm:mean_matching} is given in the supplemental material section~\ref{appn:proof}. This theorem strictly considers a single source domain; it is straightforward to be extended to multiple domains by the same arguments. When it is generalized to multiple source domains, the optimum values of the estimation $\hat{\bm \gamma}^T$ estimated from different domains are equal because the assumption $P(\bm h| y)$ is domain-invariant. Succinctly, a linear combination of asymptotically unbiased estimator is still asymptotically unbiased.

\subsection{Domain Weighting Scheme}\label{subsec:domain_weight}

Because irrelevant domains can harm adaptation performance~\cite{mansour2009domain}, multiple domain adaptation should primarily use information from the most similar domains.  However, which domains are relevant is unknown \textit{a priori}, so a weighting scheme was developed to determine the most relevant domains. The weight for source domain $\mathcal{D}_s$ is denoted as $\lambda^s$ in \eqref{eq:mean_est}.  This weighting scheme allows us to create a single network to perform multiple domain adaptation, rather than using a separate network for each domain (e.g. MDANs~\cite{zhao2017multiple}).

We determine the closest domains by finding the features with the best match in the domain adapter.  To define this, the last hidden layer of the domain adapter is given as $\bm z = f_D(\bm h; \bm \theta_D)$, where $f_D()$ is a neural network with parameter $\bm \theta_D$.  Note that this is \textit{not} the standard feature space.  Then the weights are
\begin{align}
\bm \lambda  = \text{softmax}( &[-
||\mathbb{E}[\bm z^1] - \mathbb{E}[\bm z^T]||_2^2, -
||\mathbb{E}[\bm z^2] - \mathbb{E}[\bm z^T]||_2^2, \cr
& \cdots,  - ||\mathbb{E}[\bm z^S] - \mathbb{E}[\bm z^T]||_2^2 ]), 
\nonumber
\end{align}
where the softmax is taken over this distance for each domain. $\bm z^s$ and $\bm z^T$ is the source and target features, respectively. 
Note that the distances can be scaled to determine the peakiness of the softmax function, but in practice the scale of 1 worked well.

We would like to note three important properties of this approach.  First, the choice of $z$ is important, because there is only a softmax function between $z$ and the prediction on domains. 
Therefore, if two domains are similar, then they are on average indistinguishable and appear the same to the domain adversary. Second, it is unnecessary to correct for the label imbalance. Because the label proportions re-weight the domain loss, the feature space at this stage has already accounted for the label imbalance. 
As an alternative approach, this weight can be estimated by the average probability that a sample in $\mathcal{D}_s$ is confused for a target sample; empirically, both strategies gave similar performance. Third, there is a positive feedback loop in this weighting scheme, which could potentially pose an issue if it is focused on unrelated domains. However, this feedback can be beneficial to narrow the focus to relevant domains. Empirically, we have only observed increased performance from this weighting, so this feedback loop does not appear to be a practical issue.


\subsection{Extending to Distribution Matching}\label{subsec:label_weight2}

Mean matching is an effective way of estimating label proportions; however, in many situations it is beneficial to match more than the first moment. This can be done with by matching the estimated target distribution $Q^s(\bm h)$ and the ground truth $P^T(\bm h)$ with an $f$-divergence~\cite{ali1966general}:
\begin{equation}\label{eq:f-distance}
\textstyle d_F(P^T(\bm h), Q^s(\bm h)) = \int P^T(\bm h) F \left( \frac{Q^s(\bm h)}{P^T(\bm h)}\right) d\bm h.
\end{equation}

While there are many forms of $f$-divergences, we choose $F(v) = (v-1)^2$ to match prior studies~\cite{du2014semi}, which can be effectively estimated using kernel functions. In this form, a lower bound of~\eqref{eq:f-distance} is $d_F(P^T(\bm h), Q^s(\bm h)) = \max_{r^s} \int Q^s (\bm h) r^s(\bm h) d \bm h - \int P^T(\bm h) \left( \frac{r^s(\bm h)^2}{2} + r^s(\bm h) \right) d \bm h$ using the Legendre-Fenchel convex duality~\cite{nguyen2010estimating}. This lower bound is maximized when function $r^s(\bm h)$ equals the density ratio  $\frac{Q^s(\bm h)}{P^T(\bm h)}$~\cite{keziou2003dual}. 
The lower bound of the $f$-divergence in~\eqref{eq:f-distance} requires the maximum over all possible functions for $r^s(\cdot)$, which is not achievable in practice. As a surrogate, we limit $r^s(\bm h)$ to a kernel space defined by grid points as
\begin{equation}\label{eq:ratio_kernel}
r^s(\bm h) = (\bm \alpha^s)^{\intercal} \bm \phi^s(\bm h). 
\end{equation}
$r^s(\bm h)$ is defined as a weighted combination of kernel functions $\bm \phi^s(\bm h)$ with parameters $\bm \alpha^s$ that will be learned. 
The kernel is evaluated as a radial basis function with respect to anchor or grid points.
In previous works~\cite{du2014semi,zhang2013domain,redko2018optimal}, all training samples are taken as grid points. However, it is impracticable to include all training samples in the kernel of a large dataset due to the complexity scaling of kernel methods. Computational efficiency can be accomplished through a variety of methods, such as pre-defining fixed grid points or randomly sampling a subset of the data points~\cite{snelson2006sparse}. For simplicity, we used grid points at the mean of conditional functions for labels and domains, which worked well empirically. 


If we substitute $r^s(\bm h)$ in~\eqref{eq:ratio_kernel} into a lower bound of~\eqref{eq:f-distance}, the $f$-divergence between $Q^s(\bm h)$ and $P^T(\bm h)$ can be approximated as 
\begin{align}\label{eq:f_distance_lowerbound}
\max_{\alpha^s} & -\frac{1}{2}(\bm \alpha^s)^{\intercal} \left[ \int P^T(\bm h) \bm \phi^s(\bm h) (\bm \phi^s(\bm h))^{\intercal} d \bm h \right] \bm \alpha^s \nonumber \\ 
& + (\bm \alpha^s)^{\intercal} \left[\int P^s(\bm h| \bm y) \bm \phi^s(\bm h) d \bm h \right] \bm \gamma^T -1,
\end{align}
where $P^s(\bm h| \bm y)$ is the concatenation of $\left[ P^s(\bm h| y = 1),\cdots, P^s(\bm h| y = L) \right]$. The derivation of~\eqref{eq:f_distance_lowerbound} is given in Supplemental Section~\ref{appen:label_matching}.  
To simplify the notation, define $\bm A=\int P^T(\bm h) \bm \phi^s(\bm h) (\bm \phi^s(\bm h))^{\intercal} d \bm h$ and $\bm B=\int P^s(\bm h| \bm y) \bm \phi^s(\bm h) d \bm h $, where the superscript domain index is omitted. The optimum $\bm \alpha^s$ in~\eqref{eq:f_distance_lowerbound} is $\bm A^{-1} \bm B \bm \gamma^T$. Remember that the goal is to minimize the $f$-divergence with respect to $\bm \gamma^T$, i.e. match distribution $Q^s(\bm h)$ and $P^T(\bm h)$. Substituting the optimum value of $\bm \alpha^s$ into~\eqref{eq:f_distance_lowerbound}, the objective of $\min_{\gamma^T} d_F(Q^s(\bm h), P^T(\bm h))$ becomes
\begin{align}\label{eq:dis_matching_objective}
    \min_{\gamma^T,\gamma^T_l\geq 0,||\gamma^T||_1=1} & -\frac{1}{2}(\bm \gamma^T)^{\intercal} \bm B^{\intercal} \bm A^{-1} \bm B \bm A^{-1} \bm B \bm \gamma^T \nonumber \\
    &+ \bm \gamma^T \bm B^{\intercal} \bm A^{-1} \bm B \bm \gamma^T .
\end{align}
Next we sill give how to estimate the integral with finite samples. By using kernel methods, $\bm A$ and $\bm B$ can be approximated as
\begin{align}\label{eq:f_distance_approx_terms}
\hat{\bm A}  &= \frac{1}{n^T} \sum_{j: s_j = T} \bm \phi^s(\bm h_j) (\bm \phi^s(\bm h_j))^{\intercal} \nonumber \\
\hat{\bm B} &= [ \frac{1}{n^s_1} \sum_{j: y_j = l, s_j = s} \bm \phi(\bm h_j), \cdots, \frac{1}{n^s_L} \sum_{j: y_j = L, s_j = s} \bm \phi(\bm h_j) ].
\end{align}
Note that $\intercal$ is matrix transpose (different from $T$). If we have a total of $S$ domains, there will be a total of $S \times (L+1)$ parameter $\alpha$'s to be learned. The total number of grid point is $L+1$, because we choose to use the label center in each domain. Since each $\bm \alpha$ is independent, the optimal $\bm \alpha^s$ in $\mathcal{D}_s$ can be written as 
\begin{equation}
\bm \hat{\alpha}^s = (\hat{\bm A} + \delta \bm I)^{-1} \hat{\bm B} \bm \gamma^T,
\end{equation}
where the identity matrix is added to ensure invertability. With this optimal $\bm \hat{\alpha}^s$, the only parameter to be optimized is $\bm \gamma^T$. Thus~\eqref{eq:dis_matching_objective} can be approximated  as
\begin{align}\label{eq:dis_finite_matching_objective}
 & \min_{\gamma^T,\gamma^T_l\geq 0,||\gamma^T||_1=1} \nonumber \\
 & -\frac{1}{2}( \hat{\bm B} \gamma^T)^{\intercal} (\hat{\bm A} + \delta \bm I)^{-1} \hat{\bm A} ( \hat{\bm B} + \delta \bm I)^{-1}  \hat{\bm B} \gamma^T \nonumber \\ 
& + ( \hat{\bm B} \gamma^T)^{\intercal} (\hat{\bm A} +\delta \bm I)^{-1} ( \hat{\bm B} \gamma^T)
\end{align}
Here we omit the superscript `$s$' of $\hat{\bm A}$ and $\hat{\bm B}$ for simplicity. Strictly, each domain $\mathcal{D}_s$ should have its own $\hat{\bm A}^s$ and $\hat{\bm B}^s$. When combining all source domains, the total matching loss function can be written as 
\begin{equation}\label{eq:loss_Lrf}
\textstyle\mathcal{L}_{r_F}(\bm \gamma^T) = \sum_{s=1}^S \lambda^s d_F(P^T(\bm h), Q^s(\bm h)),
\end{equation} 
where $d_F(P^T(\bm h), Q^s(\bm h))$ is approximated by the function in~\eqref{eq:dis_finite_matching_objective}. 

\subsection{Algorithm Outline}\label{subsec:algorithm_outline}

Combining all loss terms together, we need to jointly optimize neural network parameters $\bm \theta_h$, $\bm \theta_D$, $\bm \theta_Y$ and the target label proportion $\bm \gamma^T$. The objective function of the proposed model is given as
\begin{align}\label{eq:final_objective}
&\hspace{-2mm} \min_{\theta_h, \theta_Y, \gamma^T}  \max_{\theta_D} \nonumber \\
 & \hspace{-2mm}\resizebox{0.9\columnwidth}{!}{ $  \mathcal{L}_Y(\bm \theta_h, \bm \theta_Y) + \alpha_{\gamma}\mathcal{L}_{\gamma}(\bm \gamma^T) - \alpha_D \mathcal{L}_D (\bm \theta_h, \bm \theta_D) $.}
\end{align}
Here, $\mathcal{L}_Y(\bm \theta_h, \bm \theta_Y)$ is the standard cross-entropy label prediction loss. 
For purposes of optimization, the label estimation $\bm \gamma^T$ is considered a variable \textit{only} in $\mathcal{L}_{\gamma}(\bm \gamma^T) = \alpha_{\gamma,1} \mathcal{L}_{r_M}(\bm \gamma^T) + \alpha_{\gamma,2} \mathcal{L}_{r_F}(\bm \gamma^T)$, where $\mathcal{L}_{r_M}(\bm \gamma^T)$ is defined in~\eqref{eq:label_proportion_loss} and $\mathcal{L}_{r_F}(\bm \gamma^T)$ in~\eqref{eq:loss_Lrf}. The constraint on $\bm \gamma^T$ is satisfied by linking through a softmax function. For the other loss terms, $\bm \gamma^T$ is considered a constant. 
The label proportion estimator is also not used to update the feature extractor.

By setting $\alpha_{\gamma}$ to zero, the model loss in Eq.~\eqref{eq:final_objective} is equivalent to DANN if the label proportions do not update. (Note Eq.~\eqref{eq:dann_objective} is given in expectations while Eq.~\eqref{eq:final_objective} is over observed samples.) In our experiments, we compare two distinct strategies, the first only using mean matching, and the second using mean and distribution matching. The pseudo-code of the proposed algorithm is given in Algorithm~\ref{alg:propose}. 

\begin{algorithm}[h]
\caption{Multiple Source Domain Adaptation for Target Shift}\label{alg:propose}
\begin{algorithmic}
\STATE \textbf{Input}: Source samples $\{ \bm x_i, y_i, s_i \}_{i=1}^{N_S}$ and target samples $\{ \bm x_i, s_i \}_{i=N_S + 1}^{N_T}$.
\STATE \textbf{Output}: Classifier parameters $\bm \theta_h, \bm \theta_Y, \bm \theta_D$ and target label proportion $\bm \gamma^{T}$
\\\hrulefill
\STATE Calculate source label proportions $\bm \gamma^{s}$ for $s=1,\cdots,S$.
\STATE Initialize $\bm \gamma^{T} = [\frac{1}{L},\cdots,\frac{1}{L}]$ and $\lambda^{s} = \frac{1}{S}$.
\FOR{$iter = 1$ to $max\_iter$}
\STATE Sample a mini-batch training set.
  \STATE \textbf{\% Update Label Predictor and Feature Extractor}
  \STATE Fix $\bm \gamma^T$. Compute $\nabla \bm \theta_Y = \frac{\partial \mathcal{L}_Y}{\partial  \bm \theta_Y}  $ and $\nabla \bm \theta_h = \frac{\partial \mathcal{L}_Y}{\partial \bm \theta_h}  - \alpha_D \frac{\partial \mathcal{L}_D}{\partial \bm \theta_h}$ using source samples.  Update $\bm \theta_Y$ and $\bm \theta_h$ by gradient methods.
  \STATE \textbf{\% Update Domain Adapter}
  \STATE Update estimate of $\bm \lambda$ by exponential smoothing.
  \STATE Calculate $\bm \beta^s$ from current estimate of $\bm \gamma^T$.
  \STATE Compute $\nabla \bm \theta_D =\frac{\partial \mathcal{L}_D}{\partial \bm \theta_D}$ using weighted  source and target samples. Update $\bm \theta_D$ by gradient methods.
  \STATE \textbf{\% Update Target Label Proportion}
  \STATE Compute $\nabla \bm \gamma^T = \frac{\partial \mathcal{L}_{\gamma}(\bm \gamma^T)}{\partial \bm \gamma^T}$ using~\eqref{eq:label_proportion_loss} and ~\eqref{eq:loss_Lrf} on the mini-batch. Update $\bm \gamma^{T}$ by gradient methods.
\ENDFOR
\end{algorithmic}
\end{algorithm}

\section{Related Works}\label{sec:related_works}
First, we discuss previous works to estimate the proportion of labels in a blind test set. The most commonly used technique is based on marginal distribution matching~\cite{zhang2013domain,du2014semi,nguyen2016continuous}. A key idea is that the marginal target domain sample distribution, $P^T(\bm x)$, should match the distribution of a source domain weighted by the target label proportions. This can be estimated by integrating the joint of the source domain, $P^s(\bm x, y)$, with respect to estimated label proportions. Kernel mean matching~\cite{gretton2009covariate} is proved to be an effective technique to solve this problem, which has been extended in numerous ways~\cite{zhang2013domain,du2014semi,nguyen2016continuous}. However, using a RKHS to estimate $P^s(\bm x| y)$ suffers from the curse of dimensionality, reducing the utility in high dimensional feature space. Finally, the concept of Fisher consistency has been used to analyze several algorithms theoretically~\cite{tasche2017fisher}.

The covariate shift issue has an abundance of historical literature~\cite{shimodaira2000improving,zhang2013domain,sugiyama2008direct,huang2007correcting,wen2014robust}. This literature focuses on solving the discrepancy in conditional probability of $p(y| \bm x)$, while implicitly assuming the label distribution is the same in the source and target. In order to deal with target shift, people tend to use re-weight training samples in a given feature space~\cite{nguyen2016continuous}. Kernel methods can be used to learn weighting for each individual data point~\cite{long2015learning}, but is not feasible on big data. Domain adaptation aims to learn domain-invariant features, such as Transfer Component Analysis (TCA)~\cite{TCA} and Subspace Alignment (SA)~\cite{SA}. Recently, many works have explored how to learn a domain-invariant neural network feature extractor~\cite{long2016deep,long2015learning}, including via adversarial learning
~\cite{ganin2016domain,zhao2017multiple,ao2017fast,li2018extracting}. They can achieve domain-invariant features by playing a min-max game between a label classifier and a domain classifier. Compared with TCA and SA, neural network more naturally extends to a large scale dataset.~\cite{cao2017partial} proposes a partial domain adaptation in two domains under the an adversarial framework. However, generalizing their work to our situation is not trivial.
Based upon Generative Adversarial Networks (GANs) \cite{goodfellow2014generative}, many recent approaches have proposed to learn domain invariant features by transferring samples from source domain to the target~\cite{russo2017source,liu2016coupled,liu2017unsupervised,motiian2017few}. To the extent of our knowledge, these GAN-based frameworks have not considered target shift or multiple source domains for the domain adaptation task. 

Recently, optimal transport has been used to analyze the problem of label shift in domain adaptation~\cite{redko2018optimal}, but did not consider learning a feature extractor in conjunction with their framework. Notably, estimating terms in optimal transport is computationally expensive; accuracy of fast neural network based approximations is not guaranteed~\cite{courty2017optimal}. The target shift problem has also been addressed by using conditional properties via confusion matrix consistency~\cite{lipton2018detecting}. This approach has not been extended to multiple domains or adapted to learn domain-invariant feature. To the extent of our knowledge, this is the first work that learns domain-invariant features while adjusting for target shift.

\section{Experiments}\label{sec:experiment}

In this section, we test the proposed algorithm (DATS) on image and neural datasets. Most of the comparison methods are based on neural networks. For standard optimization based methods~\cite{du2014semi,zhang2013domain}, the required matrix inversion hinders their application to large-scale data. In the following, all benchmarked algorithms share the same feature extractor structure as the baseline model to ensure a fair comparison. Both `mean matching' and `DATS' are our proposed models for target shift. `Mean Matching' only has mean difference loss $\mathcal{L}_{r_M}$, while DATS contains both the label matching losses $\mathcal{L}_{r_M}$ and $\mathcal{L}_{r_F}$. Note that DANN~\cite{ganin2016domain} or MDANs~\cite{zhao2017multiple} can be viewed as similar models without label matching losses ($\alpha_{\gamma} = 0$), allowing close examination of the impact of the label matching.

\subsection{Synthetically Setting Properties on Toy Datasets}\label{subsec:exp_toy}

We first test our model on domain adaptation in handwritten digits where we synthetically alter the target shift between the source and target domains. The training set is MNIST, which is composed of digit `$4$' and `$9$', with label proportion of $20\%$ and $80\%$, respectively. The test set is MNISTM, which also contains digit `$4$' and `$9$' from, while the proportion of digit `4' changes from $10\%$ to $90\%$ with $10\%$ increments. These two digits are chosen intentionally because they are similar in shape. The feature extractor is composed of two convolutional layers. Deeper networks overfit in this problem~\cite{tzeng2017adversarial}. Both the domain adapter and label predictor are two-layer MLPs with softmax output. ReLU non-linearities are used. The result is given in Figure~\ref{fig:toy_ratio_result}.
\begin{figure}[htb]

\centering
\subfigure[AUC on MNISTM]{
\includegraphics[width=.4\textwidth]{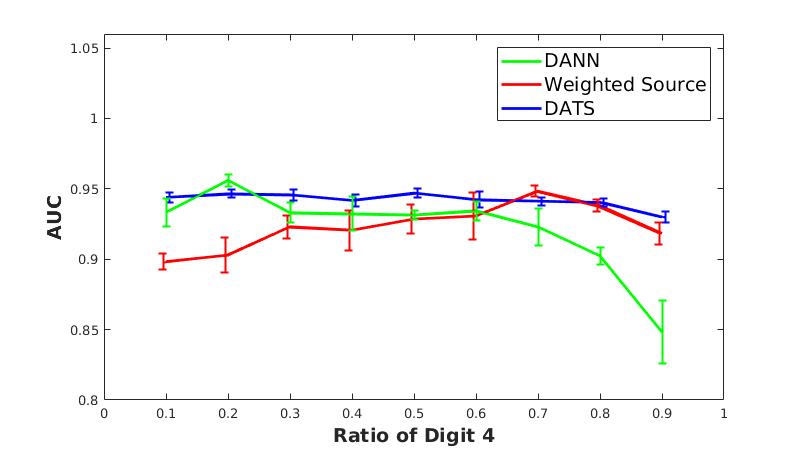}\label{fig:toy_ratio_result}}
\vspace{-3mm}
\subfigure[Label proportion in each domain.\vspace{-3mm}]{
\includegraphics[width=.4\textwidth]{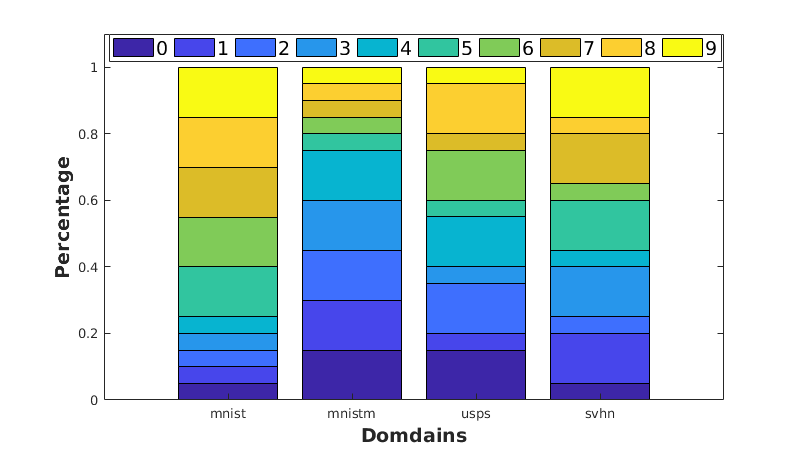}\label{fig:digit_ratio}}
\caption{(Top) Model performance comparison with different label proportion on test set. (Bottom) Label proportion in each domains for MNIST, MNISTM, USPS and SVHN.}
\end{figure}

When the target label proportion is similar to the source, the baseline DANN model performs well, because there is minimal target shift.  As the proportion of digit `$4$' increases in the target set, the amount of the target shift increases. Weighting the classes in the source set to match a uniform target label distribution, as the red line in Figure~\ref{fig:toy_ratio_result}, the performance trend is positive as the target domain becomes uniform. This is caused by the up-weighting of digit `$4$' and down-weighting of `$9$' without using any prior knowledge about the target label proportion. In comparison, the proposed algorithm robustly has high performance regardless of the label proportions. $\alpha_{\gamma}$ and $\alpha_D$ are all set as $1.0$ in this experiment. The proposed model is not overly sensitive to these tuning parameters. Note if the parameter $\alpha_D$ of \eqref{eq:dann_objective} in DANN is too large, the domain adversary becomes too powerful and predictive performance tanks due to label imbalance. Specifically, the strength of the adversary in DANN and `source weighted' is tuned to maximize performance.
As a result, the maximum AUC in DANN is above .5 because the discriminator was weakened to maximize performance (note that in practice it is not feasible to tune this parameter on an unlabeled target domain).  For our proposed models, the estimated $\hat{\bm \gamma}^T$ has at most $0.05$ difference compared to the ground truth label proportions. 

Next we look at four digit datasets: MNIST, MNISTM, USPS and SVHN. To evaluate the influence of label imbalance, we randomly assign different label proportions for each of the datasets (Figure~\ref{fig:digit_ratio}). Each time, one dataset is left out as a target while the other three are treated as training. Table~\ref{tab:image_result} gives the classification accuracy. The top row gives the name of the target domain. Note that the proposed approach robustly adapts to this situation, whereas prior methods do not. For SA~\cite{SA}, the feature input is the encoded feature $\bm h$ from baseline model for a fair comparison. 

\begin{table}[htb]
  \centering
   \resizebox{1.0\columnwidth}{!}{
  \begin{tabular}{l|c|c|c|c} 
  \hline
      & MNIST & MNISTM & USPS &SVHN \\
   \hline
      Baseline & 94.7 & 57.3 & 89.0 & 41.5 \\
      SA~\cite{SA} & 92.5 & 48.8 & 85.6 & 40.3 \\
      DAN~\cite{long2015learning} & 95.7 & 61.7 & 89.5 & 42.5 \\
      DTN~\cite{long2016deep} & 96.2 & 61.7 & 89.6 &41.7 \\
      Black Box~\cite{lipton2018detecting} & 81.5 & 42.0 & 92.4 & 42.2\\
      ADDA~\cite{tzeng2017adversarial} & 84.8 & 54.4 & 79.5 & 30.8\\
      DANN~\cite{ganin2016domain} & 94.8 & 56.6 & 89.5 & 45.0 \\
      MDANs~\cite{zhao2017multiple}  & 96.3 & 59.6 & 91.3 & 48.0 \\
      \hline
      Mean Matching & 96.6 & 67.1 & 92.3 & 47.7 \\
      DATS & \textbf{97.3} & \textbf{68.2} & \textbf{94.5} & \textbf{48.2}\\
    \hline
  \end{tabular}
  }
  \caption{Accuracy on digit image classification. } \label{tab:image_result}
  \vspace{-2mm}
\end{table}

The proposed model outperforms both DANN and MDANs on all tasks, illustrating the usefulness of the label matching term $\mathcal{L}_{\gamma}(\bm \gamma^T)$. Since the weighing scheme in MDANs does not jointly considers the label proportion, it is not robust under target shift. Practically, mean matching can stabilize the model, while adding the distribution matching marginally outperforms using only mean matching; however, even our basic strategy with minimal tuning parameters performs well compared to competing algorithms.

\subsection{Real Datasets}\label{subsec:exp_real}
We test our model on a real data composed electrical brain activity recordings using Electroencephalography (EEG) and Local Field Potentials (LFP) signals. These two datasets are described below.

\textbf{ASD Dataset}: The Autism Spectral Disorder (ASD) dataset contains Electroencephalography (EEG) signals from 22 children undergoing treatment for ASD. More details about this dataset can be found at~\cite{dawson2017autologous}. The target is their treatment stage, which is either before treatment, six months post treatment, or twelve months post treatment. The EEG signal is collected for each child when they are watching three one-minute videos designed to measure their responses to social and non-social stimuli with a standard 124 electrode layout. As is common in real-world data, the label proportions are variable, which is visualized in Appendix~\ref{appn:EEG_ratio_figure}. 

The prediction goal for this dataset is to determine when a measurement is taken. This would allow one to track how neural dynamics change as a result of treatment. Towards this end, we use the SyncNet \cite{li2017targeting} approach, which is a convolutional neural network with domain-specific interpretable features as the feature extractor. 


\begin{table}[htb]
  \centering
  \begin{tabular}{l|c|c}
  \hline
  & ASD & LFP \\
  \hline 
  SyncNet~\cite{li2017targeting} & 62.1 & 74.5 \\
   SA~\cite{SA} & 62.5 & 72.4\\
   Black Box~\cite{lipton2018detecting} & 53.6 & * \\
   DAN~\cite{long2015learning} & 61.8 &69.3 \\
   DANN~\cite{ganin2016domain}  & 63.8 & 75.1\\
   MDANs~\cite{zhao2017multiple} & 63.4 & 71.4\\
   \hline
    Mean Matching & 65.2 & \textbf{77.4} \\
    DATS & \textbf{67.2} & 77.2 \\
    \hline
  \end{tabular}
  \caption{Classification mean accuracy on EEG datasets. In our experiments,~\cite{lipton2018detecting} did not converge well on the LFP dataset.}\label{tab:Autism_result}
\end{table}

\textbf{LFP dataset}: Local Field Potential (LFP) signal are collected from implanted electrodes inside the brain. The dataset used to evaluate the proposed method contains $29$ mice from two genetic backgrounds (wild-type or CLOCK$\Delta 19$), where CLOCK$\Delta 19$ is a mouse model of bipolar disorder~\cite{van2013further}. During the data recording, each mouse spends five minutes in its home cage, spends five minutes in an open field, and ten minutes in a tail-suspension test. The task is to predict the behavior condition of the mice (home cage, open field or tail suspension). The data is pre-processed to five seconds windows. Because this dataset is controlled, its class labels are balanced. However, current experiments are being recording under freely chosen behaviors, which will result in significant target shift. In order to simulate this issue, the class labels are slightly perturbed. The label proportions for each mouse are shown in Supplemental Figure~\ref{fig:LFP_labelproportion}.

For both of the datasets, we perform leave-one-subject-out testing, i.e. one subject is picked out as target domain and the remaining ones are treated as source domains. Therefore, the source domain reaches $21$ in ASD dataset and $28$ in LFP dataset.
Mean classification accuracy over the target is given in Table~\ref{tab:Autism_result}. The proposed algorithm performs well when there is clear target shift in the data. In these experiments, the number of domains can increase drastically, while each domain usually contains only a `small' amount of data. Without adjusting for relevance of the domains, the model tends to over-fit.  The proposed model, DATS, can effectively handle adjust for label imbalance and domain weighting to give higher accuracy compared to the other baseline models. The comparative methods can fail or even not converge well when source domain number is large. Again, note that even the basic proposed strategy is effective to improve domain adaptation.

\section{Conclusion}\label{sec:conclusion}

In this work, we have addressed the target shift problem under an adversarial domain adaptation framework, and our strategy addresses is easily incorporated into standard frameworks.  We have shown that label weighting via mean matching is a simple and effective strategy, and that using distribution matching can often improve performance. Our approach also weights source domains by their relevance, increasing efficacy on multi-domain adaptation. Experiments show that the model performs consistently well in the face of large source and target domain label shift. 

\textbf{Acknowledgements}: Funding was provided by the Stylli Translational Neuroscience Award, Marcus Foundation, and NICHD P50-HD093074.  

{
\small
\bibliographystyle{plainalt}
\bibliography{ref} }

\clearpage
\appendix
\section{Derivation of the \emph{f}-divergence Lower Bound}\label{appen:label_matching}
A detailed derivation of~\eqref{eq:f_distance_lowerbound} is given here. The lower bound of the $f$-divergence is given as
\vspace{-2mm}
\begin{eqnarray*}
&& d_F(P^T(\bm h), Q^s(\bm h))  \cr
&=&  \int P^T(\bm h) F \left( \frac{Q^s(\bm h)}{P^T(\bm h)}\right) d \bm h \cr
&\geq &\resizebox{0.9\columnwidth}{!}{ $\int Q^s (\bm h) r^s(\bm h) d \bm h - \int P^T(\bm h) \left( \frac{r^s(\bm h)^2}{2} + r^s(\bm h) \right) d \bm h $}\cr
& = & -\frac{1}{2} \int P^T(\bm h) r^s(\bm h)^2 d \bm h + \int Q^s (\bm h) r^s(\bm h) d \bm h - \cr
& &\int P^T(\bm h) r_s(\bm h) d \bm h \cr
& = & -\frac{1}{2} \int P^T(\bm h) \left(\bm \alpha^s \bm \phi^s(\bm h)\right)^{\intercal} \left(\bm \alpha^s \bm \phi^s(\bm h)\right) d \bm h \cr
 & &+ \int P^s(\bm h| \bm y) \bm \gamma^T (\bm \alpha^s)^{\intercal} \bm \phi^s(\bm h) d \bm h - 1 \cr
\end{eqnarray*}
The second step follows the lower bound of $f$-divergence. The last step is derived because that $r^s(\bm h)$ is an estimation of $\frac{Q^s(\bm h)}{P^T(\bm h)}$.~\eqref{eq:dis_matching_objective} can be derived by re-arranging the terms.


\section{Proof of Theorem~\ref{thm:mean_matching}}\label{appn:proof}
In this section, we give the proof sketch of Theorem~\ref{thm:mean_matching}. For completeness, we repeat the theorem here.
\begin{theorem*}[]
Assume that $P^s(\bm h| \bm y) = P^T(\bm h| \bm y) = P(\bm h| \bm y)$, the variance in the feature space is finite, and the label proportions are all non-zero.  When the number of training and testing samples goes to infinity, $\hat{\bm \gamma}^T$ is asymptotically consistent for $\bm \gamma^T$ if $ \left( \bm M^s \right)^{\intercal} \bm M^s $ is invertible for all $s$.
\end{theorem*}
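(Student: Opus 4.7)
The plan is to break the argument into two stages: (i) establish that the pieces $\bm M^s$ and $\bm \mu^T$ that appear in the objective converge to population quantities, and (ii) use a continuous mapping / argmin consistency argument to transfer this to convergence of $\hat{\bm \gamma}^T$ itself. For (i), each column of $\bm M^s$ is a class-conditional sample mean $\bm \mu^s(\bm h\mid y=l)$. Since $\gamma^s_l>0$ for every $l$, growing $n_s\to\infty$ forces the per-class sample sizes $n^s_l\to\infty$, so by the weak law of large numbers (using the finite-variance hypothesis) these sample means converge in probability to the population conditional means $\bar{\bm \mu}(\bm h\mid y=l) = \mathbb{E}_{P(\bm h\mid y=l)}[\bm h]$. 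The hypothesis $P^s(\bm h\mid \bm y)=P^T(\bm h\mid \bm y)=P(\bm h\mid \bm y)$ is crucial here: it guarantees that the limit does not depend on $s$, so I can write $\bm M^s \xrightarrow{p} \bar{\bm M}$ with a common $\bar{\bm M}$. Likewise $\bm \mu^T\xrightarrow{p}\bar{\bm \mu}^T$, and by marginalizing the target over the label, $\bar{\bm \mu}^T = \sum_{l=1}^L P^T(y=l)\,\bar{\bm \mu}(\bm h\mid y=l) = \bar{\bm M}\bm \gamma^T$.

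For (ii), the objective $\mathcal{L}_{r_M}(\bm \gamma)=\sum_s\lambda^s\|\bm M^s\bm \gamma-\bm \mu^T\|_2^2$ is a linearly constrained convex quadratic, whose unconstrained minimizer satisfies the normal equations $\bigl(\sum_s \lambda^s(\bm M^s)^{\intercal}\bm M^s\bigr)\bm \gamma = \sum_s\lambda^s(\bm M^s)^{\intercal}\bm \mu^T$. By the invertibility assumption on $(\bm M^s)^{\intercal}\bm M^s$, the coefficient matrix is invertible, yielding a closed-form solution $\hat{\bm \gamma}^T_{\text{unc}}$. Substituting the probabilistic limits from stage (i), the limiting normal equations reduce to $\bar{\bm M}^{\intercal}\bar{\bm M}\,\bm \gamma = \bar{\bm M}^{\intercal}\bar{\bm M}\,\bm \gamma^T$, whose unique solution is $\bm \gamma=\bm \gamma^T$. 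Continuity of matrix inversion at the limiting invertible matrix, together with Slutsky's theorem, gives $\hat{\bm \gamma}^T_{\text{unc}}\xrightarrow{p}\bm \gamma^T$.

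The main obstacle I anticipate is reconciling this unconstrained argument with the simplex constraint on $\bm \gamma^T$. The plan is to note that the population solution $\bm \gamma^T$ lies strictly in the interior of the simplex (all entries positive, summing to one), so for sufficiently large samples the unconstrained minimizer $\hat{\bm \gamma}^T_{\text{unc}}$ also lies in the interior with probability tending to one; on this event the KKT multipliers for the simplex vanish and the constrained and unconstrained estimators coincide, completing the consistency argument. Extension to multiple sources is immediate: each source $s$ contributes an estimating equation whose limit is solved by the same $\bm \gamma^T$, so any convex combination via the weights $\lambda^s$ remains asymptotically unbiased, as the paper remarks.
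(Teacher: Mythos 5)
Your core argument is essentially the paper's own: both proofs write down the closed-form least-squares solution from the normal equations, show via the law of large numbers (using finite variance and the nonzero label proportions) that $\bm M^s$ and $\bm \mu^T$ converge to common population quantities $\bar{\bm M}$ and $\bar{\bm M}\bm \gamma^T$, and then invoke continuity of matrix inversion and Slutsky to conclude $\hat{\bm \gamma}^T \xrightarrow{p} \bm \gamma^T$; the paper phrases the intermediate step via the CLT and asymptotic expectations, but the substance is identical, and you are if anything more explicit about why the per-class sample sizes diverge. The one step of yours that does not work as stated is the reconciliation with the simplex constraint (a step the paper simply omits): the constraint set lies in the affine hyperplane $\|\bm \gamma\|_1 = 1$, which has Lebesgue measure zero in $\mathbb{R}^L$, so the unconstrained minimizer almost surely does \emph{not} land in the simplex for any finite sample -- the equality constraint is always active and its KKT multiplier need not vanish, so the constrained and unconstrained estimators do not coincide on an event of probability tending to one. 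The conclusion is still salvageable by a standard argmin-consistency argument: since $\bm \gamma^T$ is feasible, the constrained optimum value is at most $\mathcal{L}_{r_M}(\bm \gamma^T) \xrightarrow{p} 0$, and strong convexity of the limiting objective (guaranteed by invertibility of $\bar{\bm M}^{\intercal}\bar{\bm M}$) then forces the constrained minimizer to converge to $\bm \gamma^T$ as well; with that substitution your proof is complete and somewhat more rigorous than the paper's sketch.
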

\begin{proof}
 Considering first a single source domain, the quadratic form in Equation \ref{eq:label_proportion_loss} would give that the estimator is
 \begin{equation}
\label{eq:est_basic}
 \hat{\bm \gamma}=((\bm M^s)^\intercal(\bm M^s))^{-1}(\bm M^s)^\intercal \bm \mu ^T.
 \end{equation}

From the assumption that we have finite variance and that conditional distributions are equivalent for all source and target domains, the central limit theorem gives that as the number of samples increases
 \vspace{-2mm}
\begin{equation}
 \label{eq:mus}
    (\bm \mu^s_l-\bm \mu^*_l)\sim \mathcal{N}(\bm 0,\frac{1}{n_s\gamma^s_l}\bm \Sigma_l),
\end{equation}
This is equivalent to noting that $\bm M^s$ is asymptotically centered around $\bm M^*$, because $\bm M^s$ is just the concatenation of these individual vectors.  Likewise, we have that asymptotically
\begin{equation}
\label{eq:mut}
    \bm \epsilon=(\bm \mu ^T-\bm M^* \bm \gamma^T) \sim \mathcal{N}(\bm 0 ,\sum_{l=1}^L\frac{1}{n_T\gamma^T_l}\bm \Sigma_l).
\end{equation}
Using the equality from \eqref{eq:mut} in our estimator of \eqref{eq:est_basic}, we have that
\begin{equation}
\label{eq:est}
 \hat{\bm \gamma}=((\bm M^s)^\intercal(\bm M^s))^{-1}(\bm M^s)^\intercal \bm (\bm M^* \bm \gamma^T+ \bm \epsilon).
 \end{equation}
 Note that asymptotically the errors go to 0 from \eqref{eq:mus} and \eqref{eq:mut} on all terms in \eqref{eq:est}, so the estimator has the asymptotic expectation of
 \begin{align}
 &  \lim_{n_s\rightarrow\infty, n_T\rightarrow\infty}  \mathbb{E}[\hat{\bm \gamma}^T] \nonumber \\
  & = \lim_{n_s\rightarrow\infty, n_T\rightarrow\infty} \mathbb{E}[((\bm M^s)^\intercal(\bm M^s))^{-1}(\bm M^s)^\intercal \bm (\bm M^* \bm \gamma^T+\epsilon)] \nonumber \\
  & =  ((\bm M^*)^\intercal(\bm M^*))^{-1}(\bm M^*)^\intercal  \bm M^* \bm \gamma^T=\bm \gamma^T, \nonumber
 \end{align}
if the inverse exists.
Therefore, by the weak law of large numbers, $  \lim_{n_s\rightarrow\infty, n_T\rightarrow\infty} \hat{\bm \gamma^T}=\bm \gamma^T $.
\end{proof}
\vspace{-4mm}
In two domains, $  \lim_{n_s\rightarrow\infty, n_T\rightarrow\infty} \hat{\bm \gamma^T}=\bm \gamma^T $ shows the final result of Theorem \ref{thm:mean_matching}.
\vspace{-2mm}
\section{Time Series Data Label
\vspace{-2mm}Proportion}\label{appn:EEG_ratio_figure}
The label proportion of the two EEG datasets used in experiment Section~\ref{subsec:exp_real} are visualized in Figure \ref{fig:supplement}.
\begin{figure}[htb]
\centering
\subfigure[ASD Dataset Label Proportion]{
\includegraphics[width=.45\textwidth]{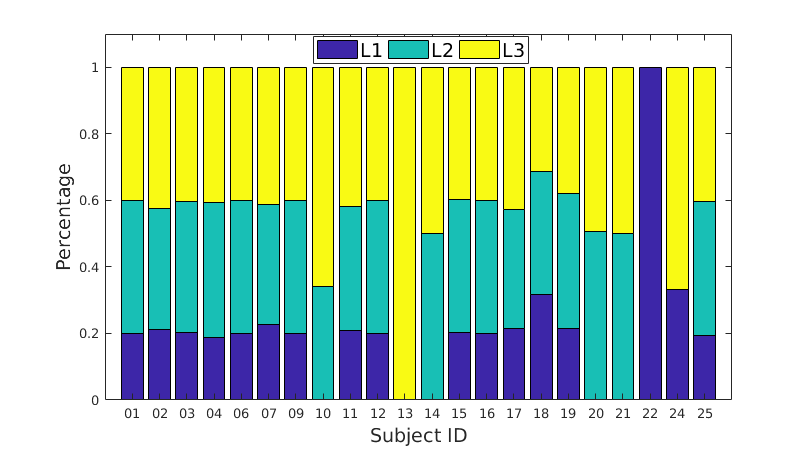} \label{fig:Autism_labelproportion}}
\vspace{-4mm}
\subfigure[LFP Dataset Label Proportion]{
\includegraphics[width=.45\textwidth]{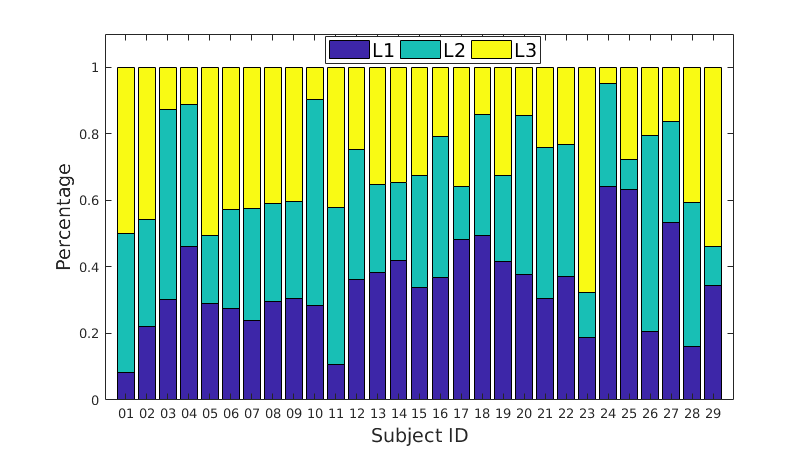} \label{fig:LFP_labelproportion}}
\caption{\label{fig:supplement}(Left) Label constitution of the ASD dataset in each domain. `L1', `L2', `L3' means before treatment, six months after treatment and twelve months after treatment, respectively. (Right) Label constitution of the LFP dataset. This dataset is for mice behavior classification using LFP signal. `L1', `L2', `L3' mean home cage, open field and tail suspension, respectively. Please refer the the text for more detailed introduction.}
\end{figure}

\end{document}